\DeclareFontFamily{U}{rcjhbltx}{}
\DeclareFontShape{U}{rcjhbltx}{m}{n}{<->rcjhbltx}{}
\DeclareSymbolFont{hebrewletters}{U}{rcjhbltx}{m}{n}
\DeclareMathSymbol{\tsadi}{\mathord}{hebrewletters}{118}
\DeclareMathSymbol{\shin}{\mathord}{hebrewletters}{152}
\newtheorem{proposition}{Proposition}
\newtheorem{definition}{Definition}
\def\eqref#1{equation~\ref{#1}}
\def\1{\bm{1}}
\DeclareMathAlphabet{\mathsfit}{\encodingdefault}{\sfdefault}{m}{sl}
\SetMathAlphabet{\mathsfit}{bold}{\encodingdefault}{\sfdefault}{bx}{n}
\title{Class Interference of Deep Neural Networks %\\Visualization and Measurement %and \\
%Action Interference for Deep Reinforcement Learning
}
\author{Dongcui Diao, Hengshuai Yao, and Bei Jiang \thanks{Dongcui and Bei are with Department of Mathematical and Statistical Sciences. Hengshuai is an adjunct professor with  Department of Computing Science. Part of the work was done by Hengshuai when he worked at Huawei.}\\
University of Alberta\\
\texttt{\{dongcui,hengshu1,bei1\}@ualberta.ca} 
}
\begin{document}

\maketitle

\begin{abstract}
Recognizing and telling  similar objects apart is even hard for human beings. In this paper, we show that there is a phenomenon of {\em class interference} with all deep neural networks. Class interference represents the {\em learning difficulty in data} and it constitutes the largest percentage of generalization errors by deep networks. To understand class interference, we propose cross-class tests, class ego directions and interference models. We show how to use these definitions to study minima flatness and class interference of a trained model. We also show how to detect class interference during training through label dancing pattern and class dancing notes. 
\end{abstract}

\section{Introduction}
Deep neural networks are very successful for classification \citep{lecun2015deep,goodfellow2016deep} and sequential decision making \citep{mnih2015human,silver2016mastering}. However, there lacks a good understanding of why they work well and where is the bottleneck. For example, 
it is well known that larger learning rates and smaller batch sizes can train models that generalize better. \citet{sharp_minima} found that large batch sizes lead to models that look sharp around the minima. According to \citet{flat_minima_97}, flat minima generalize better because of the minimum-description-length principle: low-complexity networks generalize well in practice. 

However, some works have different opinions about this matter \citep{sharpness_yoshua,sharp_minima2,loss_contour_vis}. \citet{sharp_minima2} showed that sharp minima can also generalize well and a flat minimum can always be constructed from a sharp one by exploiting inherent geometric symmetry for ReLU based deep nets. \citet{loss_contour_vis} presented an experiment in which small batch minimizer is considerably sharper but it still generalizes better than large batch minimizer by turning on weight decay. Large batch training with good generalization also exists in literature \citep{large_batch_goldstein,large_batch_kaiming}. By adjusting the number of iterations, \cite{large_batch_more_updates} showed there is no generalization gap between small batch and large batch training. 

These works greatly helped  understand the generalization of deep networks better. However, it still remains largely mythical. In this paper, we show there is an important phenomenon of deep neural networks, in which certain classes pose a great challenge for classifiers to tell them apart at test time, causing {\em class interference}. 

Popular methods of understanding the generalization of deep neural networks are based on minima flatness, usually by visualizing the loss using the interpolation between two models \citep{linear_interpolation,sharp_minima,linear_inter_loss_surface,three_factors,linear_inter_connect_minima,loss_contour_vis,linear_inter_monotonic,linear_inter_model_icml22,linear_inter_fullyconnected}.
Just plotting the losses during training is not enough to understand generalization. Linearly interpolating between the initial model and the final trained model provides more information on the minima. 

A basic finding in this regard is the monotonic property: as the interpolation approaches the final model, loss decreases monotonically \citep{linear_interpolation}. \citet{linear_inter_monotonic} gave a deeper study of the monotonic property on the sufficient conditions as well as counter-examples where it does not hold. 
\citet{linear_inter_model_icml22} showed that certain hidden layers are more sensitive to the initial model, and the shape of the linear path is not indicative of the generalization performance of the final model. 
\citep{loss_contour_vis} explored visualizing using two random directions and showed that it is important to normalize the filter. 

We take a different approach and study the loss function in the space of {\em class ego directions}, following which parameter update can minimize the training loss for individual classes.

The contributions of this paper are as follows.
\begin{itemize}
\item 
Using a metric called CCTM that evaluates class interference on a test set, 
we show that class interference is the major source of generalization error for deep network classifiers.
We show that class interference has a {\em symmetry pattern}. In particular, deep models have a similar amount of trouble in telling ``{\em class A objects are not class B}'', and ``{\em B objects are not A}''. 

\item To understand class interference, we introduce the definitions of class ego directions and interference models.

\item 
In the class ego spaces, small learning rates can lead to extremely sharp minima, while learning rate annealing leads to minima that are located at large lowlands, in terrains that are much bigger than the flat minima previously discovered for big learning rates.  

\item The loss shapes in class ego spaces are indicative of interference. Classes that share similar loss shapes in other class ego spaces are likely to interfere.

\item We show that class interference can also be observed in training. In particular, it can be detected from a special pattern called {\em label dancing}, which can be further understood better by plotting the {\em dancing notes} during training. Dancing notes show interesting interference between classes. For example, a surprise is that we found FROG interferes CAT for good reasons in the CIFAR-10 data set. 

\end{itemize}

\section{Class Interference}\label{sec:inter_boundary}

\subsection{Generalization Tests and The Class Interference Phenomenon}\label{sec:gen_tests}
Let $c_1$ and $c_2$ be class labels.
We use the following {\em cross-class test of generalization}, which is the percentage of $c_2$ predictions for the $c_1$ objects in the test set:
\[
CCTM(c_1, c_2) =\frac{\# \mbox{ predicting as } c_2}{\#\mbox{total } c_1 \mbox{ objects}},
\]
Note this test being an accuracy or error metric depends on whether the two classes are the same or not. Calculating the measure for all pairs of classes over the test set gives a matrix. We refer to this measure the CCT matrix, and simply the {\em CCTM} for short.

\begin{figure}[t]
\centering
\includegraphics[width=\columnwidth]{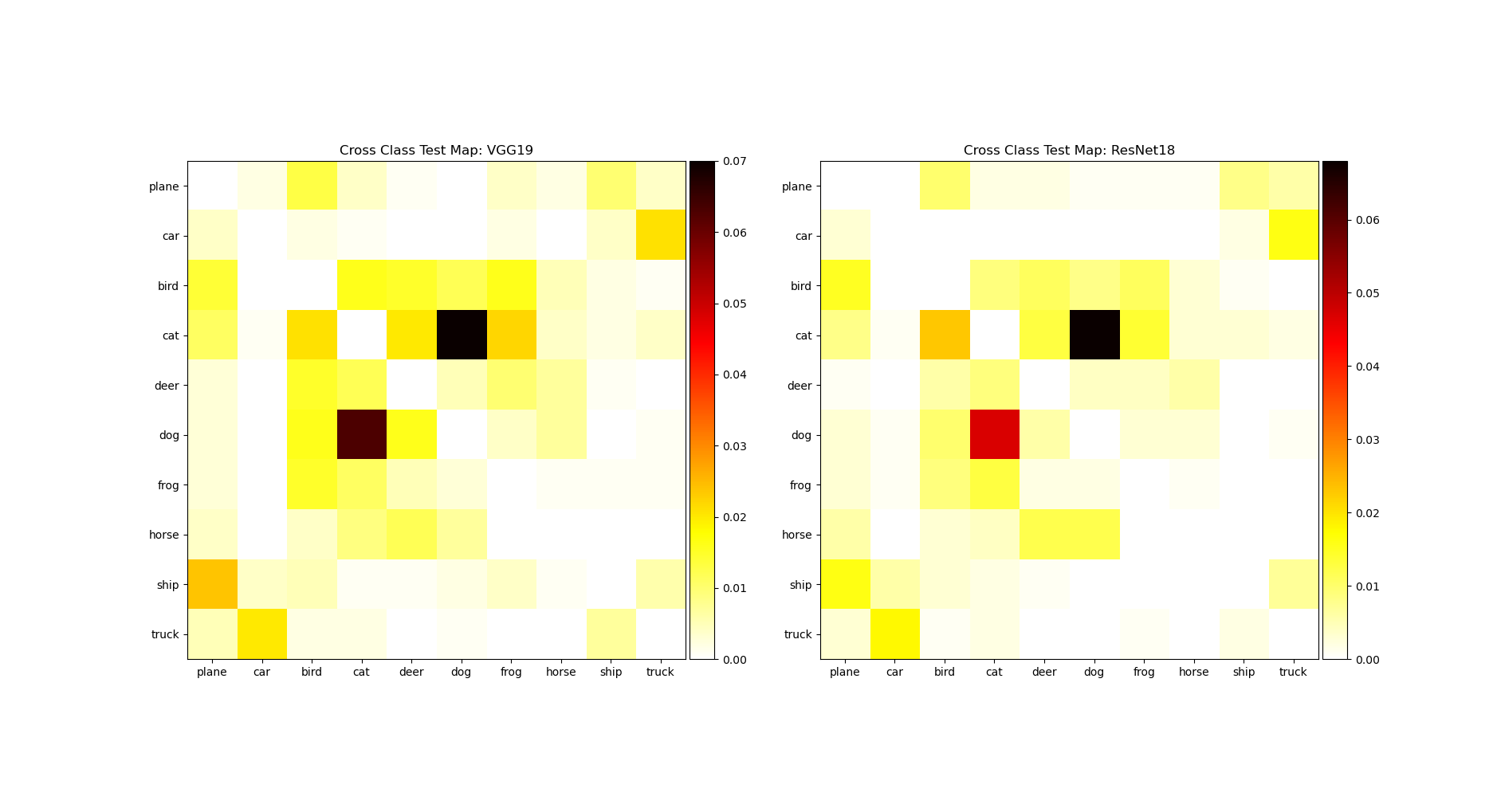}
\caption{CCTM for VGG19 and ResNet18 on CIFAR-10 {\em test set}, showing there is severe {\em\bfseries class interference} between CAT and DOG for both models.
A darker color means a higher interference from the column class to the row class. The diagonal entries are not plotted for a clearer looking heat map. ResNet18 has less interference than VGG19 between CAT and DOG, indicated by: (1) there are fewer darker cells in the CAT row; and (2) the DOG-CAT cell is brighter (red vs. dark grey).    
}\label{fig:cctm-vgg19}
\end{figure}

Figure \ref{fig:cctm-vgg19} shows the CCTM for VGG19 \citep{VGG} and ResNet18 \citep{resnet} on the CIFAR-10 \citep{cifar-dataset} test set with a heat map. Models were trained with SGD (see Section \ref{sec:minima_flat_or_sharp} for the training details). From the map, we can see that the most significant generalization errors are from CAT and DOG for both models. This difficulty is not specific to models. It represents {\em class similarity and learning difficulty in data}. For example, in Table \ref{tab:g_diag}, the accuracies in the columns of CAT and DOG are significantly lower than the other columns for all the four deep models. It is also observable that class interference has a {\em\bfseries symmetry pattern}: If a classifier has trouble in recognizing that $c_1$ objects are not class $c_2$, it will also have a hard time in ruling out class $c_1$ for $c_2$ objects. This can be observed from CAT and DOG in the plotted CCTM.

We call generalization difficulties of deep neural networks between classes like CAT and DOG the {\em\bfseries class interference}. If $CCTM(c_1, c_2)$ is large, we say that {\em\bfseries class $c_2$ interferes $c_1$}, or {\em \bfseries class $c_1$ has interference from $c_2$}. 
Class interference happens when classes are just similar. In this case, cats and dogs are hard to recognize for humans as well, especially when the resolution of images is low. 
Examining only the test error would not reveal the class interference phenomenon because it is an overall measure of all classes. The classes have a much varied difference in their test accuracies. For example, in VGG19, the recall accuracy of CAT, i.e., $CCTM(CAT, CAT)$, is only about 86.7\% and DOG recall is about 90.0\%. For the other classes the recall accuracy is much higher, e.g., SHIP is 96.5\%. As shown in Table \ref{tab:g_diag}, ResNet18 \citep{resnet}, GoogleNet \citep{googlenet} and DLA \citep{DLA} have less class interference than VGG19 especially for CAT and DOG. For example, for ResNet18, $CCTM(CAT, CAT)=89.9\%$ and $CCTM(DOG, DOG)=91.5\%$.  

\begin{table}[t]
\centering
\begin{tabular}{c|c|c|c|c|c| c|c|c|c|c}
\hline
     &plane & car & bird & {\bfseries cat} & deer & {\bfseries dog} & frog & horse & ship & truck \\
\hline
 VGG19 &94.3&  96.1 & 91.7&   {\bfseries 86.7} &   94.6 &   {\bfseries 90.0}&    95.8& 96.0&       96.5 &      96.4   \\  
 \hline
 ResNet18 &95.9 &98.6 &94.8 & {\bfseries 89.9} &96.5 &{\bfseries 91.5} &97.2 &96.7 &96.5 &95.8 \\ 
 \hline
GoogleNet &96.3&      97.4& 93.9&      {\bfseries 89.3}&      96.5&      {\bfseries 92.4}&      96.8&
 95.7 & 96.6 &      96.7  \\   
 \hline
DLA & 96.3&      97.7&      95.5&      {\bfseries 90.1}& 96.2& {\bfseries 93.2}&      97.4&
 97.8& 97.1&      97.1 \\     
\hline
\end{tabular}
\caption{Class recall accuracy (the diagonal entries of the CCTM) on CIFAR-10 {\em test set}. All the models suffer from the interference between CAT and DOG.}
\label{tab:g_diag}
\end{table}

\subsection{Definitions}
Let $w^*$ be a trained neural network model, e.g., VGG19 or ResNet18. We use the following definitions.

\begin{definition}[Interference Model Set]
Let $\mathcal{D}_c$ be the samples of class $c$ in a data  set. Define the {\bfseries\em gradient of class $\bold{c}$} as the average gradient that is calculated on this set: 
\[
\nabla f^{(c)}(w^*) \stackrel{def}{=} \frac{1}{|\mathcal{D}_c|}\sum_{(X,Y)\in \mathcal{D}_c} f'(w^*|X, Y). 
\]
Accordingly, there are a set of class gradient directions for the model, $\{\nabla f^{(c)}(w^*)|c=1, 2, \ldots, C\}$, where $C$ is the number of classes. 

An {\em \bfseries ego model} of class $\bold{c}$ is generated by using a scalar $\alpha_i$ in the class gradient direction:
\[
w_i^{(c)} = w^* - \alpha_i \nabla f^{(c)}(w^*).
\]
The set, $\mathcal{M}_c = \{w_i^{(c)}| i=1, \ldots, m_c\}$, is the {\em ego model set} of class $c$. The set union, $\mathcal{M}=\cup_{c=1}^C \mathcal{M}_c$, is called the {\em ego model set}.  
\end{definition}
This definition is based on that each  $w_i^{(c)}$ is in the direction of minimizing the loss for predicting class $c$. Note that $w_i^{(c)}$ is a sample of ``ego-centric'' update, which minimizes the loss for class $c$ only. It therefore could cause an increase in the prediction errors for the other classes.
We refer to the gradient of class $c$ as the {\em\bfseries ego direction} of the class. 
Measuring the loss on the interference models thus tells the interference between classes.

\begin{definition}[Interference Space]
The model space $\{w^{(c_1, c_2)}|(\theta_1, \theta_2) \in \Theta_1 \times \Theta_2 \}$ is called the {\em interference model space} of class $c_1$ and $c_2$, where an interference model is defined by
\[
w^{(c_1, c_2)} = w^* - \left( \theta_1 \nabla f^{(c_1)}(w^*) + \theta_2 \nabla f^{(c_2)}(w^*) \right).
\]
Define $\mathcal{F}^{(c_1, c_2)}=\{f(w^{(c_1, c_2)})|(\theta_1, \theta_2)\in \Theta_1 \times \Theta_2\}$, which is the set of {\em interference losses between the two classes}. The 3D space, $\Theta_1 \times \Theta_2 \times \mathcal{F}^{(c_1, c_2)}$, is the {\em  loss interference space}, or simply, the {\em interference space} (of class $c_1$ and class $c_2$ for model $w^*$).

\end{definition}

\begin{proposition}\label{prop:convex}
Any interference model is a convex combination of the ego models of the two classes.  
\end{proposition}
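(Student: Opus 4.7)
The plan is to exhibit, for any interference model $w^{(c_1,c_2)} = w^* - \bigl(\theta_1 \nabla f^{(c_1)}(w^*) + \theta_2 \nabla f^{(c_2)}(w^*)\bigr)$, a pair of ego models (one from the class $c_1$ ego direction, one from the class $c_2$ ego direction) and a weight $\lambda \in [0,1]$ such that the corresponding convex combination equals $w^{(c_1,c_2)}$. Since the definition of an ego model imposes no restriction on the scalar $\alpha_i$ beyond being a real number, I have the freedom to pick both the scalars and the convex weight.

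First I would write down a generic convex combination of two ego models $w^{(c_1)} = w^* - \alpha_1 \nabla f^{(c_1)}(w^*)$ and $w^{(c_2)} = w^* - \alpha_2 \nabla f^{(c_2)}(w^*)$, namely
\[
\lambda\, w^{(c_1)} + (1-\lambda)\, w^{(c_2)} = w^* - \lambda\alpha_1 \nabla f^{(c_1)}(w^*) - (1-\lambda)\alpha_2 \nabla f^{(c_2)}(w^*),
\]
using the fact that the $w^*$ terms collapse because $\lambda + (1-\lambda) = 1$. Next I would match coefficients against the definition of $w^{(c_1,c_2)}$, giving the two scalar equations $\lambda\alpha_1 = \theta_1$ and $(1-\lambda)\alpha_2 = \theta_2$. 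These are under-determined, so a clean explicit choice such as $\lambda = 1/2$, $\alpha_1 = 2\theta_1$, $\alpha_2 = 2\theta_2$ (or, if one prefers to keep the scalars fixed in advance, $\lambda = \theta_1/(\theta_1+\theta_2)$ when $\theta_1,\theta_2 > 0$) makes the identification explicit.

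There is no serious obstacle. The only thing worth checking carefully is that the chosen ingredients actually qualify: $\lambda = 1/2 \in [0,1]$, so it is a legitimate convex weight, and the $\alpha_i$'s are arbitrary reals as permitted by the definition of an ego model. Hence every interference model lies on the segment between two ego models, and the proposition follows. I would also briefly note the geometric content of the argument: the interference space is the plane spanned by the two class ego directions through $w^*$, and convex combinations of two ego models trace out precisely the segment in that plane that is parameterized by $(\theta_1,\theta_2)$ with $\theta_1/\alpha_1 + \theta_2/\alpha_2 = 1$, so varying $\alpha_1,\alpha_2$ sweeps out the entire space.
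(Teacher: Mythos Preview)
Your proposal is correct and follows essentially the same algebraic identity as the paper: expand a convex combination $\lambda w^{(c_1)} + (1-\lambda) w^{(c_2)}$ to get $w^* - \lambda\alpha_1 \nabla f^{(c_1)}(w^*) - (1-\lambda)\alpha_2 \nabla f^{(c_2)}(w^*)$ and match $\theta_1 = \lambda\alpha_1$, $\theta_2 = (1-\lambda)\alpha_2$. The only cosmetic difference is direction: the paper starts from the convex combination and identifies the resulting $\theta$'s, leaving the invertibility implicit, whereas you start from a given $(\theta_1,\theta_2)$ and exhibit an explicit solution ($\lambda = 1/2$, $\alpha_i = 2\theta_i$), which is arguably the cleaner way to argue the stated implication.
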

\begin{proof}
Let $w_i^{(c_1)}$ and $w_j^{(c_2)}$ be the ego model of class $c_1$ and $c_2$, respectively. According to their definition, 
\begin{align*}
\lambda w_i^{(c_1)} + (1-\lambda) w_j^{(c_2)}
&= \lambda w^* - \lambda\alpha_i \nabla f^{(c_1)}(w^*) + (1-\lambda) w^* - (1-\lambda)\alpha_j\nabla f^{(c_2)}(w^*)\\
&= w^* - \left(\lambda\alpha_i\nabla f^{(c_1)}(w^*) + (1-\lambda)\alpha_j \nabla f^{(c_2)}(w^*) \right)\\
&=w^{(c_1, c_2)},
\end{align*}
where setting $\theta_1= \lambda\alpha_i$ and $\theta_2=(1-\lambda)\alpha_j$ finishes the proof. 
\end{proof}

\section{Minima: Flat or Sharp? }\label{sec:minima_flat_or_sharp}
Our first experiment is to understand minima sharpness of learning rate using class ego directions. 
We will visualize in the interference space, $\Theta_1 \times \Theta_2 \times \mathcal{F}^{(c_1, c_2)}$. 
We use this loss: the {\em mistake rate} for the $z$-axis, which is the percentage of classification mistakes on the training set to give a loss measure in the same range across different plots. 
We visualize the loss of the models on the training set versus $\Theta_1 \times \Theta_2$, which is a uniform grid over $[-\sigma, \sigma]\times [-\sigma, \sigma]$, with 19 points in each direction. This gives $361$ interference models between a given class pair. We use the ego directions of CAT-DOG (the most interfering class pair), TRUCK-CAR (with a significant level of interference), and HORSE-SHIP (with little interference). 
These plots measure how sensitive the training loss changes with respect to the directions that focus on optimizing specially for individual classes and the linear combinations of these directions. The center of each plot corresponds to the origin, $(\theta_1=0, \theta_2=0)$, at which a trained VGG19 or ResNet is located. 

We study the models of VGG19 and ResNet18 trained with the following optimizer setups:
\begin{itemize}
    \item {\bfseries big-lr}. This optimizer uses a big learning rate, $0.01$. The momentum and weight decay are the same as the small-lr optimizer. Figure \ref{fig:acc_egos_vgg_resnet} shows for VGG19 (top row) and ResNet18 (bottom row).
    
    \item {\bfseries small-lr}. 
    This SGD optimizer uses a small learning rate $0.0001$. It also has a momentum (rate $0.9$) and a weight decay (rate $0.0005$).

\item {\bfseries anneal-lr}. Similar to the above optimizers, but with an even bigger (initial) learning rate. A big constant learning rate $0.1$ leads to oscillatory training loss and poor models. We thus decay it with an initial value of $0.1$ using a Cosine rule \citep{lr_cosine_anneal}. This is the optimizer setup used to train the models in Section \ref{sec:gen_tests}.
\end{itemize}
The input images are transformed with RandomCrop and RandomHorizontalFlip and normalization. The batch size is 128. The Cross Entropy loss is used. 
Each model is trained with 200 epochs. The test accuracies for the models are shown in the following table.
\begin{center}
\begin{tabular}{c c c| c c c}
\hline
 {\small VGG-small-lr} &  {\small  VGG-big-lr} &  {\small VGG-anneal-lr} & {\small ResNet-small-lr}&  {\small ResNet-big-lr}& {\small ResNet-anneal-lr} \\ 
84.99\% & 88.76\% & 93.87\% &86.88\% &91.31\% & 95.15\%  \\
\hline
\end{tabular}
\end{center}
This confirms that big learning rates generalize better than small ones as discovered by the community. Interestingly, the anneal learning rate leads to models that generalize even much better, for which there has been no explanation to the best of our knowledge. 
 
Let's first take a look at VGG19 trained with big-lr, whose interference spaces are shown at the top row of Figure \ref{fig:acc_egos_vgg_resnet}.  
The loss exhibits strong sharpness in the CAT-DOG ego visualization. 
From the minimum (the trained VGG19 at the center), a small step of optimizing the CAT predictions easily deteriorates the loss, in particular the red flat plateau corresponds to an accuracy on the training set down to merely $10\%$.   
The loss change is extremely sensitive in the CAT ego direction. It is similarly sensitive in all directions except near the DOG ego direction, which looks still very sensitive. According to Proposition \ref{prop:convex}, any interference model in this space is a convex combination of a CAT ego model and a DOG ego model. This plot thus shows that the CAT ego is very influential even the weight of the DOG ego is large. 

The visualizations in the CAR-TRUCK and HORSE-SHIP ego spaces show that the loss changes much less sensitively than for CAT and DOG when we update the model for the purpose of improving or even sacrificing the prediction accuracy of the four classes.
However, close to the directions of TRUCK ego plus negative CAR ego, and negative TRUCK ego plus CAR ego, the loss also changes abruptly. If we cut the loss surface 135 degrees in the $x$-$y$ axis, we end up getting a minimum that looks sharp. On the other hand, a random cut likely renders a less sharp or even flat look of the minimum. The case of HORSE-SHIP is similar. Thus whether the minimum looks flat or sharp is dependent on how the loss contour is cut. Some care needs to be taken when we discuss minima sharpness, especially the space in which the loss is plotted. 
Most previous discussions on minima sharpness are based on the difference between an initial model and a trained model, or two random directions. 
Both methods have randomization effects and yet they get descent loss contours. While it is amazing, the reason why random cuts render reflective loss contours is unclear. Our guess is that most directions renders sharpness and sampling a random one is likely fine. However, when we compare the levels of sharpness between models, random cuts may not be accurate.  

\begin{figure}[t]
\centering
\includegraphics[width=\columnwidth]{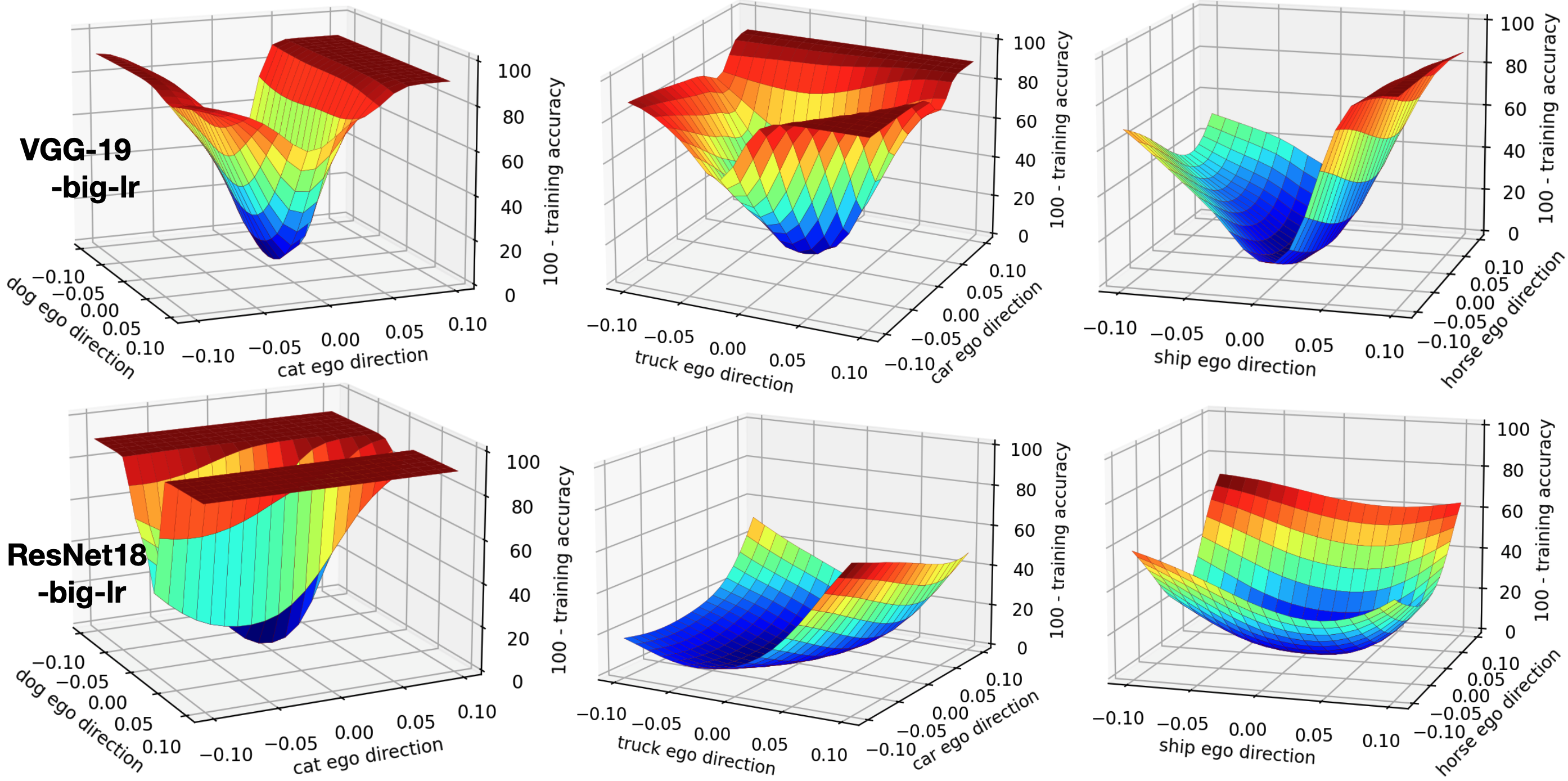}
\caption{Loss visualized in the class ego directions for models optimized with big-lr. Model: VGG19 (top row); ResNet18 (bottom row). Each point stands for the loss of an interference model $w^{(c_1, c_2)}$.   
}\label{fig:acc_egos_vgg_resnet}
\end{figure}

Figure \ref{fig:acc_egos_vgg_resnet} bottom row shows for ResNet18 optimized with the big-lr optimizer. The loss change near the minimum is also extremely sensitive in the CAT-DOG ego space. Interestingly, for ResNet18, the loss in the DOG ego direction is more sensitive than in the CAT direction. This seems a ``transposed'' effect of VGG19, because the influence of the DOG ego is stronger on the loss now.
For both VGG19 and ResNet18, the loss visualized in the CAT-DOG ego space has a clear narrow valley structure near the minimum. This kind of loss functions are known to be very challenging for gradient descent, e.g., see the Rosenbrock function also known as the Banana function \citep{rosenbrock1960automatic}. 
In the CAR-TRUCK space, the loss of ResNet18 is much less curvy up than that of VGG19. In particular, for VGG19 it is sensitive in both the ego directions, while for ResNet18, only near the direction about 135 degrees ($x$-$y$ axis) it is sensitive. For VGG19, the SHIP direction has lots of sensitivity. For ResNet18, the HORSE direction instead is more sensitive.      

Our results show the minima being {\em \bfseries flat or sharp} is dependent on what spaces the loss is illustrated. We think a better way of discussing generalization is the {\em\bfseries area of flatness} around the minima in critical directions. Our plots in different class ego spaces show that a minimum can be a flat minimum in certain visualization spaces (e.g., ResNet18 in the CAR-TRUCK ego space), while at the same time it can look very sharp in other spaces (e.g., ResNet18 in the CAT-DOG space).

  \begin{figure}[t]
\centering
\includegraphics[width=\columnwidth]{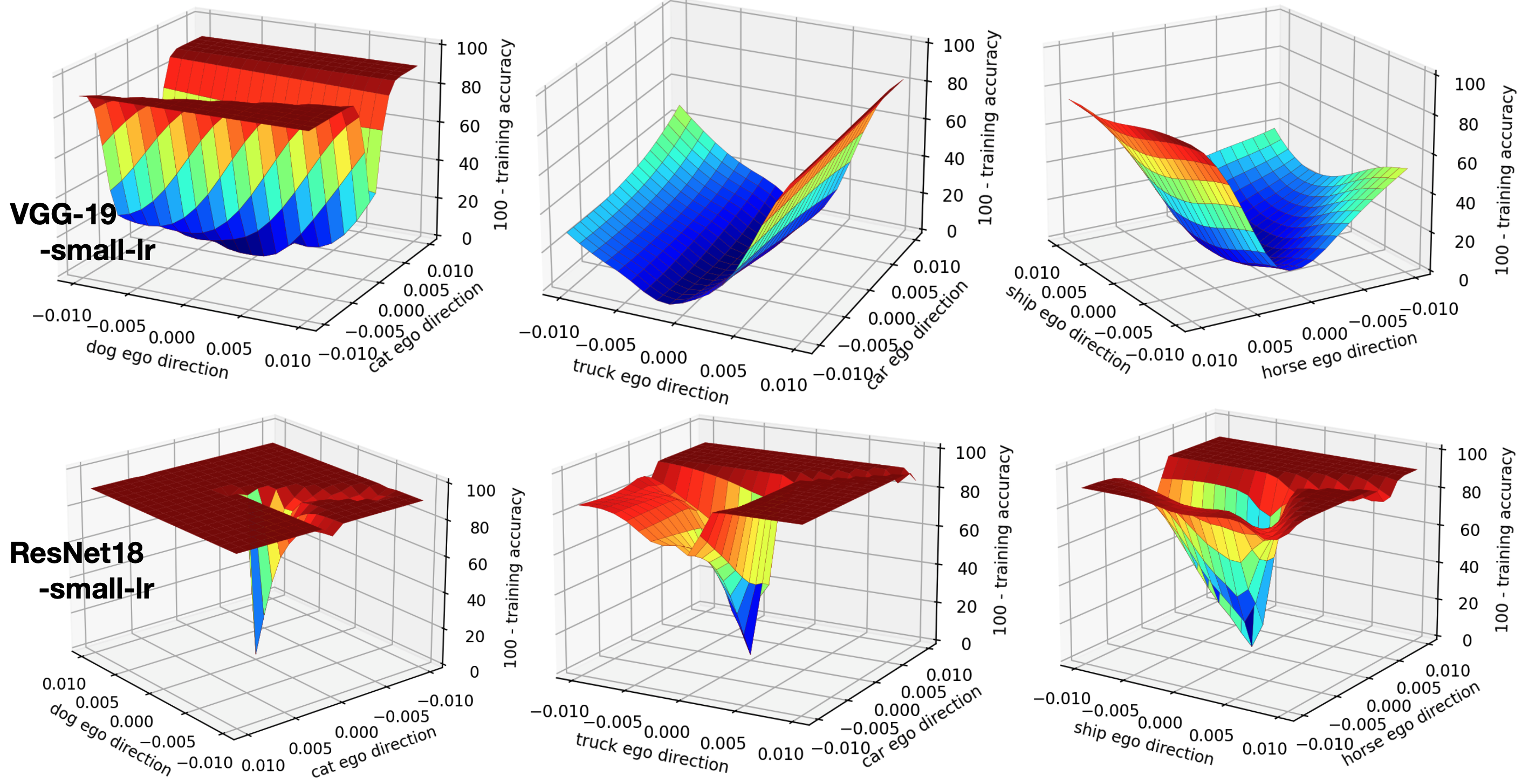}
\caption{Loss visualized in the class ego directions for models optimized with small-lr. ResNet18 is a spiky (extremely sharp) minimum according to the visualization in the ego spaces, even though the area of the illustration is 1/100 of Figure \ref{fig:acc_egos_vgg_resnet} (big-lr). 
}\label{fig:acc_egos_vgg_resnet_small_lr}
\end{figure}

\begin{figure}[t]
\centering
\includegraphics[width=\columnwidth]{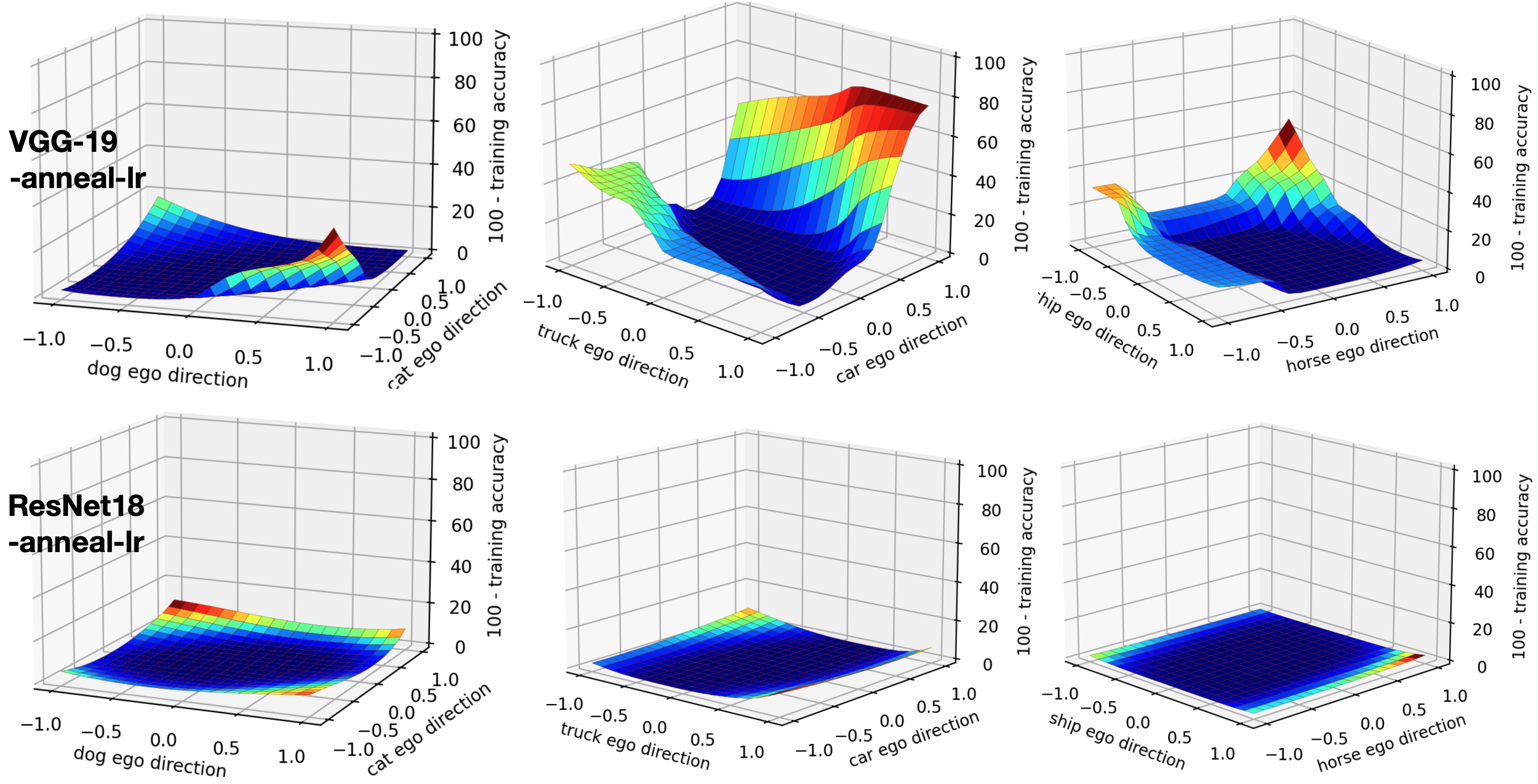}
\caption{Loss visualized in the class ego directions for models optimized with anneal-lr. In this illustration area 100 times of Figure \ref{fig:acc_egos_vgg_resnet} (big-lr), both minima are located at a large flat lowland. The ResNet18 has an extremely large flat area.  
}\label{fig:acc_egos_vgg_resnet_anneal_lr}
\end{figure}  

Figure \ref{fig:acc_egos_vgg_resnet_small_lr} shows the small learning rate. This time ResNet18 is an extremely sharp minimum in all the three ego spaces. In a small area around the minimum in ego spaces, the loss changes dramatically. Beyond that small area, the loss is invariantly high (plateau). VGG19, instead, has a more smooth change of loss in a small area although in the CAT ego direction the loss changes abruptly too (which forms a cliff). This shows when the learning rate is small, the loss contour can be near non-smooth and sharp minima do not necessarily generalize worse (comparing to VGG19). This confirms the findings by \citet{sharp_minima2} and \citep{loss_contour_vis} that there exist models that are sharp minima and yet they still generalize well. In particular, ResNet18 has a better generalization than VGG19, 86.88\% versus 84.99\% in this case. Our results show that flat minima generalize better when the learning rate is well tuned (not too small). However, when the learning rate is small, the minima can be sharp and they can generalize even better than less sharp ones.

Finally, Figure \ref{fig:acc_egos_vgg_resnet_anneal_lr} shows for the models optimized with learning rate annealing. These two models have superior generalization, with 93.87\% for VGG19 and 95.15\% for ResNet18. The visualization in the ego spaces show that the area of flatness is very large, especially ResNet18. Comparing to a fixed big learning rate, the models trained by annealing have a much higher level of in-sensitiveness to parameter changes in the class ego directions. Presumably, the big initial learning rate helps establish a larger flat area. This level of flatness has not been observed before, especially in previous experiments of learning rates. We may thus refer to minima located in a large flat terrain the {\bfseries\em lowland minima}.

 \section{Analyzing Class Interference}

\begin{figure}[t]
\centering
\includegraphics[width=\columnwidth]{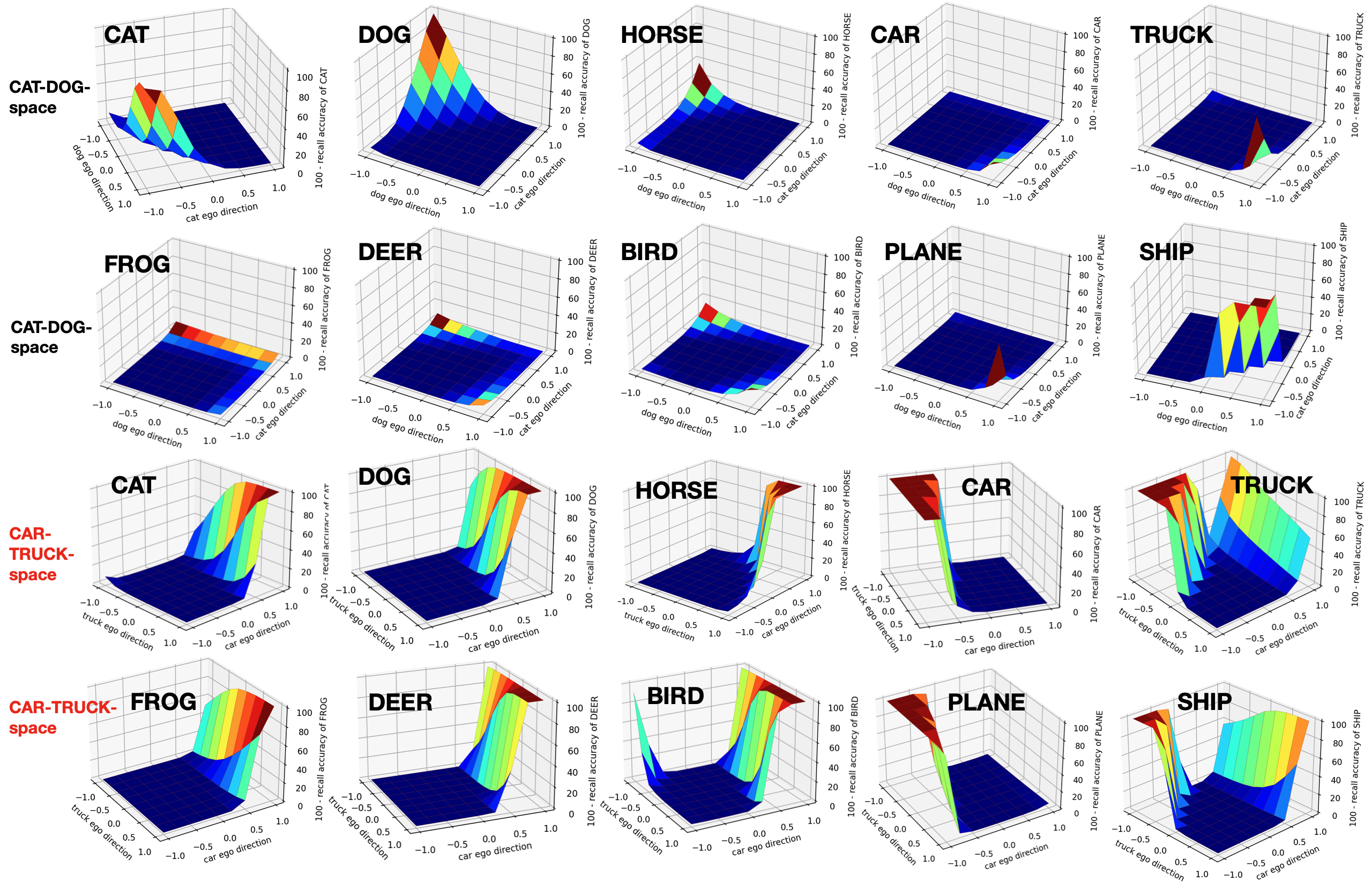}
\caption{Class-wise losses. 
This shows all class prediction losses are more sensitive in the CAR-TRUCK space than in the CAT-DOG space. In particular, the CAR ego direction is very sensitive, which means all the other class prediction losses are greatly influenced by the movement in minimizing or maximizing (gradient ascent) the CAR loss. Model: VGG19. Optimizer: anneal-lr.
}\label{fig:fc_loss_catdog_cartruc}
\end{figure}  

\subsection{Interference from One Class to the Others}
We also would like to understand the interference from one class to the others for a trained model. Figure \ref{fig:fc_loss_catdog_cartruc} shows the interference of CAT, DOG, CAR and TRUCK to all the classes. First let's look at the CAT loss in the CAT-DOG space (first plot). It shows CAT loss increases in the cat ego direction, i.e., the gradient {\em ascent} direction, which is intuitive. It also shows the CAT loss increases most when we minimize the DOG loss. This is another verification that DOG interferes CAT. Interestingly, following the joint direction of gradient ascent directions to maximize the CAT loss and the DOG loss doesn't increase the CAT loss much. 
In the case of CAR loss in the CAR-TRUCK space, the situation is a little different. In particular, CAR loss increases significantly whether we follow the gradient descent or ascent direction of TRUCK as long as we move in the ascent direction of CAR. TRUCK loss is more complicated. The loss increases in the joint direction of ascent directions of CAR and TRUCK losses. In addition, TRUCK loss also increases if we follow the the descent direction of CAR. This means minimizing the CAR loss has the effect of increasing the TRUCK loss. This is also a sign that CAR and TRUCK interferes. For the other classes, their prediction losses respond more sensitively to the ego directions of CAR-TRUCK than those of CAT-DOG. 

In the CAT-DOG space, 
CAR, TRUCK, PLANE, and SHIP all increase their losses in one same corner. HORSE and DEER losses both increase as we get closer to the corner where DOG loss increases; in addition, the increase of HORSE is more than DEER in this process.  

In the CAR-TRUCK space, CAT, FROG, DOG, and DEER losses have very similar shapes. This suggests these losses increase in roughly the same directions in the CAR-TRUCK space. HORSE's loss shape is also similar to these four classes, but the similarity is less. 
CAR and PLANE losses have very similar shapes. TRUCK and SHIP losses have a similar wing-like structure too. CAR, PLANE, TRUCK and SHIP have similar loss shapes on the left side of the plots shown. These observations suggest that loss shapes in class ego spaces are indicative of interference. Classes that share similar loss shapes in other class ego spaces are likely to interfere. This is going to be discussed further in the next experiment.

\begin{figure}[t]
\centering
\includegraphics[width=0.85\columnwidth]{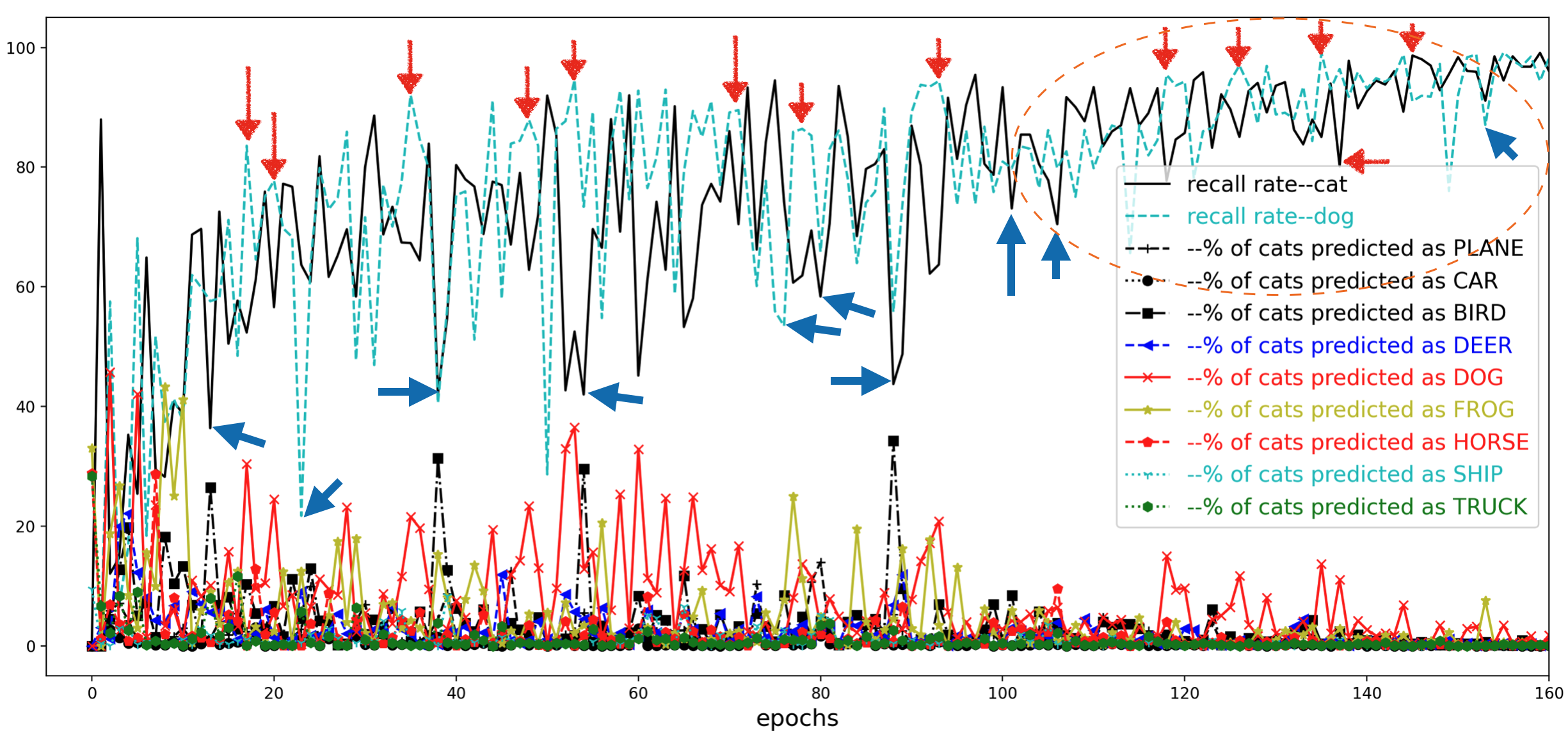}
\caption{
The CAT-DOG dance. 
Each red arrow is a moment that DOG interferes CAT, with a local high DOG recall rate (cyan dashed line) and a drop of the CAT recall rate (black line), and a local peak of mis-classified cats as DOG (red cross). Each blue arrow is a sample moment that both recall rates drop, in response to a high mis-classification rate of cats as a third class, e.g., BIRD (black rectangle). 
Model: VGG19. Optimizer: anneal-lr. 
}\label{fig:cats_Gmatrix_training}
\end{figure}

\subsection{Class Interference in Training}
The above experiments are for a trained model. 
We were wondering whether class interference can be observed in training. To study this, we plot the per-class training accuracy which is the recall rate for each class. Figure \ref{fig:cats_Gmatrix_training} shows for CAT and DOG. The two recall rates are both highly oscillatory, especially in the beginning stage of training. Importantly, there are many moments that one rate being high while the other being low at the same time, which we call {\em\bfseries label dance} or CAT-DOG dance for this particular case. This dancing pattern is a strong indicator that CAT and DOG interfere. To further confirm this, we plot in the same figure the row of the CCTM for the training set that correspond to CAT, i.e., $CCTM(CAT, c)$, for each non-CAT class $c$, during the same training process. As the caption of the figure shows, a rise in the DOG recall rate is often caused by a high interference of DOG to CAT. After some (about 118) epochs, DOG interference dominates CAT predictions errors and eventually weeds out following the other classes. In this phase of training (as circled in the figure), the recall rates of CAT and DOG are highly symmetric to each other (horizontally), further indicating that DOG interference is the major source of error in predicting cats and vice versa.

Figure \ref{fig:notes_Gmatrix_training} plots the ``argmax'' operation of the CCTM for the rows corresponding to four classes at each epoch, excluding the diagonal part. %A point at a curve ``Class $c$ Notes'' stands for the most interfering class index for the objects of class $c$ in the training set given the model at that epoch. 
The plot looks similar to music notes. So we term this plot ``dancing notes''. 
For DOG notes, there are many pink markers at the line $y=3$, which is the class label corresponding to CAT. The stretched markers laying continuously is a clear sign of CAT interference to DOG. In the CAT notes, continual red crosses also persist at $y=5$, which is the class label of DOG, showing interference of DOG to CAT. It also shows that CAT interference to DOG persists longer than the other way. %This suggests that CAT will have a lower recall rate than DOG at test time, which is consistent with the CCTM on the test set in Section \ref{sec:gen_tests}.  
For a better presentation of the results, we plot $y=-2$ if no class interferes more than $0.1\%$. 

The notes of CAR (class label 1) and TRUCK (class label 9) show similar duration of interference, and it appears the interference from CAT to TRUCK seems to have a close strength to the other way around. It is also interesting to observe that both CAR and TRUCK have interference from class labels $y=0$ and $y=8$, which correspond to PLANE and SHIP. This is intuitive because these are all human made metallic crafts. It appears that the interference from PLANE to TRUCK is more often than to CAR, probably because trucks are bigger in size than cars. 

CAT has interference from BIRD (2) given their similar fluffy looks. Surprisingly, FROG (6) also interferes CAT pretty often. We checked the CIFAR-10 images visually and it is probably because the images are mostly close looks of the objects; in this case cats have two pointy ears which are easily confused with frogs who have their eyes positioned atop. Besides CAT, DOG has interference from HORSE (7) and DEER (4) because they are all four-legged. 
It is interesting to observe that CAT, on the other hand, almost does not have interference from HORSE, with only two or three moments of interference out of 200 epochs. This means HORSE is very helpful to differentiate between CAT and DOG, which is the largest source of generalization error as we discussed in Section \ref{sec:gen_tests}.
DOG also has a little interference from BIRD (2) similar to CAT does.  

\begin{figure}[t]
\centering
\includegraphics[width=0.9\columnwidth]{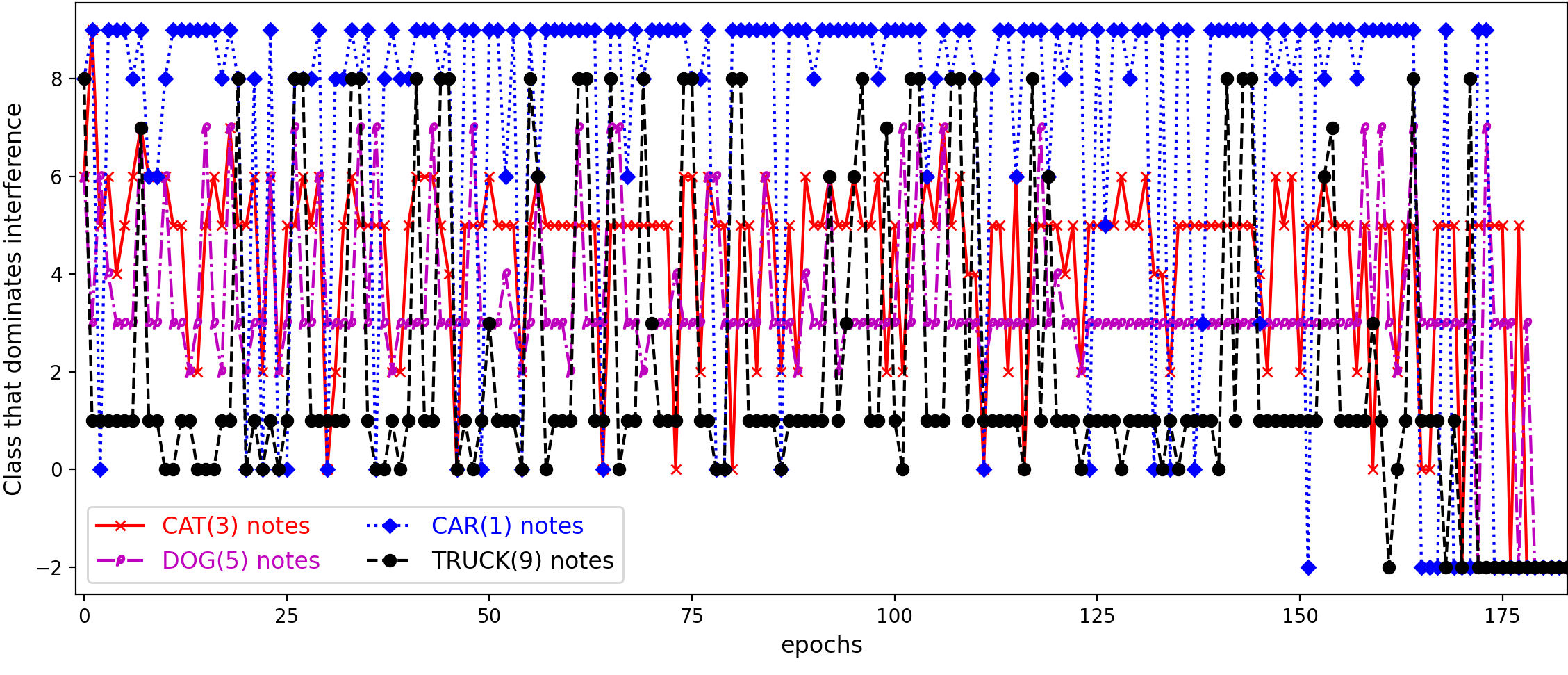}
\caption{
The dancing notes for CAT, DOG, CAR and TRUCK. A data point of the CAT notes curve is the most often false prediction at that epoch. %, or 
%the class index corresponding to which the cats in the training set are most mistakenly predicted as. 
Model: VGG19. Optimizer: anneal-lr. 
}\label{fig:notes_Gmatrix_training}
\end{figure}

\section{Conclusion}
This paper illustrates a phenomenon called class interference of deep neural networks. We show it is the bottleneck of classification, which represents learning difficulty in data. The proposed cross-class generalization tests, class ego directions, interference models and the study of class-wise losses in class ego directions provide a tool set for studying the generalization of trained deep neural networks. The study of {\em label dancing} via the {\em dancing notes} provides a method of detecting class interference during training. With the provided tools in these two dimensions, we hope this paper is useful to understand the generalization of deep nets, improve existing models and training methods, and understand the data better as well as the learning difficulty of recognition.

\bibliography{ref}
\bibliographystyle{iclr2023_conference}

% \appendix
% \section{Appendix}
% You may include other additional sections here.

\end{document}